\documentclass{article}
\pdfoutput=1

\usepackage{arxiv}

\usepackage[utf8]{inputenc} 
\usepackage[T1]{fontenc}    
\usepackage{hyperref}       
\usepackage{url}            
\usepackage{booktabs}       
\usepackage{amsfonts}       
\usepackage{nicefrac}       
\usepackage{microtype}      
\usepackage{lipsum}		
\usepackage[pdftex]{graphicx}
\usepackage[square,sort,comma,numbers]{natbib}
\usepackage{doi}

\usepackage{graphicx}

\usepackage[table]{xcolor} 
\usepackage{xcolor}
\usepackage{amsmath}
\usepackage{tcolorbox}
\usepackage{multirow}
\usepackage{layouts}
\usepackage{enumitem}

\usepackage{subfigure}
\usepackage{wrapfig}


\newcommand{\LightSecAgg}{\texttt{LightSecAgg}}
\newcommand{\LightSecAggspace}{\LightSecAgg{ }}
\newcommand{\name}{\texttt{BASecAgg}}
\newcommand{\namespace}{\name{ }}
\newcommand{\secagg}{\texttt{SecAgg}}
\newcommand{\secaggspace}{\secagg{ }}
\newcommand{\FedAsync}{\texttt{FedAsync}} \newcommand{\FedAsyncspace}{\FedAsync{ }}
\newcommand{\FedAt}{\texttt{FedAt}}
\newcommand{\FedAtspace}{\FedAt{ }}
\newcommand{\FedAvg}{\texttt{FedAvg}}
\newcommand{\FedAvgspace}{\FedAvg{ }}
\newcommand{\FedBuff}{\texttt{FedBuff}}
\newcommand{\FedBuffspace}{\FedBuff{ }}

\usepackage{algorithm}
\usepackage[noend]{algpseudocode}
\usepackage{dsfont}

\usepackage{enumitem}

\newlist{SubItemList}{itemize}{1}
\setlist[SubItemList]{label={$-$}}

\let\OldItem\item
\newcommand{\SubItemStart}[1]{%
    \let\item\SubItemEnd
    \begin{SubItemList}[resume]%
        \OldItem #1%
}
\newcommand{\SubItemMiddle}[1]{%
    \OldItem #1%
}
\newcommand{\SubItemEnd}[1]{%
    \end{SubItemList}%
    \let\item\OldItem
    \item #1%
}
\newcommand*{\SubItem}[1]{%
    \let\SubItem\SubItemMiddle%
    \SubItemStart{#1}%
}%

\usepackage{babel}
\usepackage[font=small,labelfont=bf]{caption}

\usepackage{mathtools}

\DeclarePairedDelimiter{\floor}{\lfloor}{\rfloor}

\DeclareMathAlphabet{\mathbfsl}{OT1}{ppl}{b}{it}

\usepackage{amsthm}

\newtheorem{theorem}{Theorem}
\newtheorem{lemma}{Lemma}
\newtheorem{remark}{Remark}

\newtheorem{assumption}{Assumption}

\title{Secure Aggregation for Buffered Asynchronous Federated Learning}


\author{%
  Jinhyun So \\
  ECE Department \\
  University of Southern California (USC) \\
  \href{mailto:jinhyuns@usc.edu}{\texttt{jinhyuns@usc.edu}}  \\
  \And
  Ramy E. Ali \\
  ECE Department \\
  University of Southern California (USC) \\
  \href{mailto:reali@usc.edu}{\texttt{reali@usc.edu}}\\
  \AND
  Ba\c{s}ak G\"uler \\
  ECE Department\\
  University of California, Riverside\\
  \href{mailto:bguler@ece.ucr.edu}{\texttt{bguler@ece.ucr.edu}} \\
  \And
  A. Salman Avestimehr \\
  ECE Department \\
  University of Southern California (USC) \\
  \href{mailto:avestime@usc.edu}{\texttt{avestime@usc.edu}} \\
}



\hypersetup{
pdftitle={A template for the arxiv style},
pdfsubject={q-bio.NC, q-bio.QM},
pdfauthor={David S.~Hippocampus, Elias D.~Striatum},
pdfkeywords={First keyword, Second keyword, More},
}

\begin{document}
\maketitle

\begin{abstract}
Federated learning (FL) typically relies on synchronous training, which is slow due to stragglers. While asynchronous training handles stragglers efficiently, it does not ensure privacy due to the incompatibility with the secure aggregation protocols. A buffered asynchronous training protocol known as \FedBuffspace has been proposed recently which bridges the gap between synchronous and asynchronous training to mitigate stragglers and to also ensure privacy simultaneously. \FedBuffspace allows the users to send their updates asynchronously while ensuring privacy by storing the updates in a trusted execution environment (TEE) enabled private buffer. TEEs, however, have limited memory which limits the buffer size. Motivated by this limitation, we develop a buffered asynchronous secure aggregation (\name) protocol that does not rely on TEEs. The conventional secure aggregation protocols cannot be applied in the buffered asynchronous setting since the buffer may have local models corresponding to different rounds and hence the masks that the users use to protect their models  may not cancel out.  \namespace addresses this challenge by carefully designing the masks such that they cancel out even if they correspond to different rounds. Our convergence analysis and experiments show that \namespace almost has the same convergence guarantees as \FedBuffspace without relying on TEEs. 
\end{abstract}


\section{Introduction}\label{sec:intro}
Federated learning (FL) allows users to collaboratively train a machine learning model without sharing their data and while protecting their privacy \cite{mcmahan2016communication}. The training is typically coordinated by a central server. The main idea that enables decentralized training without sharing data is that each user trains a local model using its dataset and the global model maintained by the server. The users then only share their local models with the server which updates the global model and pushes it again to the users for the next training round until convergence. Recent studies, however, showed that sharing the local models still breaches the privacy of the users through inference or inversion attacks  e.g.,~\cite{fredrikson2015model,nasr2019comprehensive,zhu2020deep,geiping2020inverting}. To overcome this challenge, secure aggregation protocols were developed to ensure that the server only learns the global model without revealing the local models \cite{bonawitz2017practical,so2021turbo,kadhe2020fastsecagg,zhao2021information,elkordy2020secure,bell2020secure}. FL protocols commonly rely on synchronous training \cite{mcmahan2016communication}, which  suffers from stragglers due to waiting for the updates of a  sufficient number of users at each round. Asynchronous FL tackles this by incorporating the updates of the users as soon as they arrive at the server \cite{xie2019asynchronous,van2020asynchronous,chai2020fedat,chen2020asynchronous}. While asynchronous FL handles stragglers efficiently, it is not compatible with the secure aggregation protocols designed particularly for synchronous FL. This is because these protocols securely aggregate many local models together each time the global model is updated and hence they are not suitable for asynchronous FL in which each single local model updates the global model. Another approach that can be applied in asynchronous FL to protect the privacy of the users is local differential privacy (LDP) \cite{truex2020ldp}. In this approach, each user adds a noise to the local model before sharing it with the server. This approach, however, degrades the training accuracy.

In \cite{nguyen2021federated}, an asynchronous aggregation protocol known as \FedBuffspace has been proposed to mitigate stragglers and enable secure aggregation jointly. \FedBuffspace enables secure aggregation through trusted execution environments (TEEs) as Intel software guard extensions (SGX) \cite{costan2016intel}. Specifically, the individual updates are not incorporated by the server as soon they arrive. Instead, the server keeps the received local models in a TEE-enabled \emph{secure buffer} of size $K$, where $K$ is a tunable parameter. The server then updates the global model when the buffer is full.
This idea has been shown to be  $3.8$ times faster than the conventional synchronous FL schemes.

\textbf{Contributions}. Since TEEs have limited memory, which limits the buffer size $K$, and are inefficient compared to the untrusted hardware \cite{costan2016intel}, we instead develop a buffered asynchronous secure aggregation protocol that does not rely on TEEs. The main challenge of leveraging the conventional secure aggregation protocols in the buffered asynchronous setting is that the pairwise masks may not cancel out. This is because of the asynchronous nature of this setting which may result in local models of different rounds in the buffer, while the pairwise masks cancel out if they belong to the same round.  This requires a careful design of the masks such that they can be cancelled even if they do not correspond to the same round. Specifically, our contributions are as follows.

\begin{enumerate}[leftmargin=*]
    \item We propose a buffered asynchronous secure aggregation protocol that extends a recently proposed synchronous secure aggregation protocol known as \LightSecAggspace \cite{LightSecAgg2021} to this buffered  asynchronous setting. The key idea of our protocol, \name, is that we design the masks such that they cancel out even if they correspond to different training rounds. 
    \item We extend the convergence analysis of \cite{nguyen2021federated} to the case where the local updates are quantized, which is necessary for the secure aggregation protocols to protect the privacy of the local updates.  
    \item Our extensive experiments on MNIST and CIFAR datasets  show that \namespace almost has the same convergence guarantees as \FedBuffspace despite the quantization. 
\end{enumerate}

\section{Related Works}\label{sec:RelatedWorks}
Secure aggregation protocols typically rely on exchanging pairwise random-seeds and secret sharing them to tolerate users' dropouts \cite{bonawitz2017practical,so2021turbo,kadhe2020fastsecagg,bell2020secure}. The running time of such approaches, however, increases significantly with the number of dropped users since the server needs to reconstruct the mask of each dropped user. Recently, a secure aggregation protocol known as \LightSecAggspace has been proposed to address this challenge \cite{LightSecAgg2021}. In \LightSecAgg, unlike the prior works, the server does not reconstruct the pairwise random-seeds of each  dropped user. Instead, the server directly reconstructs the aggregate masks of all surviving users. This one-shot reconstruction of the masks of all surviving users results in a much faster  training. It is also worth noting that the protocol of  \cite{zhao2021information} is based on the one-shot reconstruction idea, but it requires a trusted third party unlike  \LightSecAgg.

Prior secure aggregation protocols \cite{bonawitz2017practical,so2021turbo,kadhe2020fastsecagg,bell2020secure} are designed for the synchronous FL algorithms such as \FedAvgspace \cite{mcmahan2016communication}, which suffer from stragglers. Asynchronous FL handles this problem by updating the global model as soon as the server receives any local model \cite{xie2019asynchronous,van2020asynchronous,chai2020fedat,chen2020asynchronous}. The larger staleness is of the local model, the greater is the error when updating the global model \cite{xie2019asynchronous}. To address this staleness problem, an asynchronous protocol known as \FedAsyncspace has been developed in \cite{xie2019asynchronous} that updates the global model through staleness-aware weighted averaging of the old global model and the received local model. In \cite{chai2020fedat}, an asynchronous protocol known as \FedAtspace has been proposed, which bridges the gap between synchronous FL and asynchronous FL by developing a semi-synchronous protocol that groups the users, synchronously updates the model of each group and then asynchronously updates the global model across groups. Similarly, a semi-synchronous FL protocol has been developed in \cite{park2020sself} to handle the staleness problem and also mitigates Byzantine users simultaneously.

Asynchronous FL, however, is not compatible with secure aggregation. A potential approach to ensure privacy then is through  DP approaches that add noise to the local models before sharing them with the sever \cite{van2020asynchronous}. A similar approach has been also leveraged in \cite{gu2021privacy} to develop a privacy-preserving protocol for a limited class of learning problems as linear regression, logistic regression and least-squares support vector machine in the vertically partitioned (VP) asynchronous decentralized FL setting. Adding noise, however, degrades the training accuracy. In  \cite{nguyen2021federated}, an asynchronous aggregation protocol known as \FedBuffspace has been proposed to mitigate stragglers while ensuring privacy. The key idea of \FedBuffspace is that the server stores the local models in a TEE-enabled secure buffer of size $K$ until the buffer is full and then securely aggregates them. Due to the memory limitations of TEEs, this approach is only feasible when $K$ is small. This motivates us in this work to develop a buffered asynchronous secure aggregation protocol \emph{without TEEs}. 

\section{Synchronous Secure Aggregation}\label{sec:SyncSecAgg}
In this section, we provide an overview of secure aggregation of synchronous FL. \\ The goal in FL is to collaboratively learn a  global model $\mathbfsl{x}$ with dimension $d$, using the local datasets of $N$ users without sharing them. This problem can be formulated as minimizing a global loss function as follows
\begin{equation}\label{eq:objective_fnc} 
    \min_{\mathbfsl{x} \in \mathbb R^d} 
    F(\mathbfsl{x}) = \sum_{i=1}^N w_i F_i (\mathbfsl{x}), 
\end{equation} 
where $F_i$ is the local loss function of user $i \in [N]$ and $w_i \geq 0$ are the weight parameters that indicate the relative impact of the users and are selected such that $\sum_{i=1}^{N} w_i=1$. \\ This problem is  solved iteratively. At round $t$, the server sends the global model $\mathbfsl{x}^{(t)}$ to the users. Some of the users may dropout due to various reasons such as wireless connectivity. We assume that at most $D$ users may dropout in any round.  We denote the set of the surviving users at round $t$ by $\mathcal U^{(t)}$ and the set of dropped users by $\mathcal D^{(t)}$. User $i\in[N]$ updates the global model by carrying out $E \geq1$ local stochastic gradient descent (SGD) steps. The goal of the server is to get the sum of the local models of the surviving users to update its global model as $\mathbfsl x^{(t)}=\frac{1}{|\mathcal U^{(t)}|} \sum_{i \in \mathcal U^{(t)}} \mathbfsl x_i ^{(t)}.$ The server then sends $\mathbfsl x^{(t)}$ to the users for the next round. While the users do not share their data with the server and just share their local models, the local models still reveal significant information about their datasets \cite{fredrikson2015model,nasr2019comprehensive,zhu2020deep,geiping2020inverting}. To address this challenge, a secure aggregation protocol known as \secaggspace was developed in \cite{bonawitz2016practical} to ensure that the server does not learn anything about the local models except $\sum_{i \in \mathcal U^{(t)}} \mathbfsl x_i ^{(t)}$ at round $t$. Specifically, we assume that up to $T$ users can collude with each other as well as with the server to reveal the local models of other users. The secure aggregation protocol then must ensure that nothing is revealed beyond the aggregate model despite such collusions. 

\subsection{Overview of \secaggspace}\label{subsec:secagg}
We now provide an overview of \secagg. In this discussion, we omit the round index $t$ for simplicity since the procedure is the same at each round. \secaggspace ensures privacy against any subset of up to $T$ colluding users and resiliency against $D$ colluding workers provided that $N>D+T$. \\ In \secagg, the users mask their models before sharing them with the server using random keys. Specifically, each pair of users $i, j \in [N]$ agree on a pairwise random seed $a_{i, j}$. Moreover, user $i$ also uses a private random seed $b_i$ that is used when the update of this user is delayed but eventually reaches the server. The model of user $i$ is then masked as follows
\begin{align}\label{eq:secagg_maskedmodel}
    \mathbfsl y_i=\mathbfsl x_i+\mathrm{PRG}(b_i)+\sum_{j: i<j} \mathrm{PRG}(a_{i, j})-\sum_{j: i>j} \mathrm{PRG}(a_{j, i}),
\end{align}
where $\mathrm{PRG}$ is a pseudo random generator. The server then reconstructs the private random-seed of each surviving user, the pairwise random-seed of each dropped user and recovers the aggregate model of the surviving users as follows 
\begin{align}
    \sum_{i \in \mathcal U} \mathbfsl x_i= \sum_{i \in \mathcal U}  (\mathbfsl y_i-\mathrm{PRG}(b_i))+ \sum_{i \in \mathcal D} \left(\sum_{j: i<j} \mathrm{PRG}(a_{i, j})-\sum_{j: i>j} \mathrm{PRG}(a_{j, i})\right).
\end{align}

\subsection{Overview of \LightSecAggspace}\label{subsec:lightsecagg}
Next, we provide an overview of \LightSecAgg. \LightSecAggspace has three parameters $T$ that represents the privacy guarantee, $D$ that represents that dropout guarantee and $U$ which represents the targeted number of surviving users. These parameters must be selected such that $N-D \geq U \geq T$. In \LightSecAgg, user $i$ selects a random mask $\mathbfsl z_i$ and partitions it to $U-T$ sub-masks denoted by $[\mathbfsl z_i]_1, \cdots, [\mathbfsl z_i]_{U-T}$. User $i$ also selects another $T$ random masks denoted by $[\mathbfsl n_i]_{U-T+1}, \cdots, [\mathbfsl n_i]_{U}$. These $U$ partitions $[\mathbfsl z_i]_1, \cdots, [\mathbfsl z_i]_{U-T}, [\mathbfsl n_i]_{U-T+1}, \cdots, [\mathbfsl n_i]_{U}$ are then encoded through an $(N, U)$ Maximum Distance Separable (MDS) code \cite{macwilliams1977theory} as follows 
\begin{align}
   [\widetilde{\mathbfsl z}_i]_j= \left([\mathbfsl z_i]_1, \cdots, [\mathbfsl z_i]_{U-T}, [\mathbfsl n_i]_{U-T+1}, \cdots, [\mathbfsl n_i]_{U} \right) \mathbfsl v_j,
\end{align}
where $\mathbfsl v_j$ is the $j$-th column of a Vandermonde matrix $\mathbf V \in \mathbb F_q^{U \times N}$. After that, user $i$ sends $[\widetilde{\mathbfsl z}_i]_j$ to user $j \in [N] \setminus \{i\}$. User $i$ then masks its model as
$\mathbfsl y_i=\mathbfsl x_i+\mathbfsl z_i.$

The goal of the server now is to recover the aggregate model $\sum_{i \in \mathcal U_1} \mathbfsl x_i$, where $\mathcal U_1$ is the set of surviving users in this phase.  To do so, each surviving users $j \in \mathcal U_1$ sends $\sum_{i \in \mathcal U_1} [\widetilde{\mathbfsl z}_i]_j$ to the server. The server then directly recovers $\sum_{i \in \mathcal U_1} [\mathbfsl z_i]_k$ for $k \in [U-T]$ through MDS decoding when it receives at least $U$ messages from the surviving users. We denote this subset of the surviving users by $\mathcal U_2$, where $|\mathcal U_2|=U$. Finally, the server recovers the aggregate model as $\sum_{i \in \mathcal U_1} \mathbfsl x_i=\sum_{i \in \mathcal U_1} \mathbfsl y_i-\sum_{i \in \mathcal U_1} \mathbfsl z_i$.  

\section{Buffered Asynchronous Secure Aggregation}\label{sec:BASecAgg}
In this section, we provide a brief overview of \FedBuffspace \cite{nguyen2021federated}. Then, we illustrate the incompatibility of the conventional secure aggregation with asynchronous FL in Section \ref{subsec:incompatibility}. Later on, in Section \ref{subsec:proposed}, we introduce \name. 


In asynchronous FL, the updates of the users are not synchronized while the goal is the same as the synchronous FL to minimize the global loss function in \eqref{eq:objective_fnc}. In the buffered asynchronous setting, the server stores each local model that it receives in a buffer of size $K$ and updates the global model when the buffer is full. In our setting, this buffer is \emph{not a secure buffer}. Hence, our goal is to design the secure aggregation protocol where users send the masked updates to protect the privacy in a way that the server can aggregate the local updates while the server (and potential colluding users) learns no information about the local updates beyond the aggregate of the updates stored in the buffer.

\textbf{FedBuff.} Before presenting our protocol, \name, we first provide an overview about the buffered asynchronous aggregation framework, named \FedBuff~\cite{nguyen2021federated}, and describe the challenges that render \secaggspace incompatible with this framework. The key intuition of \FedBuffspace is to introduce a new design parameter $K$, the buffer size at the server, so that \FedBuffspace has two degrees of freedom, $K$ and the concurrency $C$  while the synchronous FL frameworks have only one degree of freedom, concurrency. The concurrency is the number of users training concurrently and is an important parameter to provide a trade-off between the training time and the data inefficiency. Synchronous FL speeds up the training by increasing the concurrency, but higher concurrency results in data inefficiency~\cite{nguyen2021federated}. In \FedBuff, however, a high concurrency coupled with a proper value of $K$ results in fast training. In other words, the additional degree of freedom $K$ allows the server to update more frequently than concurrency, which enables  \FedBuffspace to achieve data efficiency at high concurrency. 

At round $t$, $C$ users are locally training the model by carrying out $E\geq1$ local SGD steps.
When the local update is done, user $i$ sends the difference between the downloaded global model and updated local model to the server. 
The local update of user $i$ sent to the server at round $t$ is given by
\begin{equation}\label{eq:local_update}
    {\Delta}^{(t;t_i)}_i = \mathbfsl{x}^{(t_i)} - \mathbfsl{x}^{(E)}_i,
\end{equation}
where $t_i$ is the latest round index when the global model is downloaded by user $i$ and $t$ is the round index when the local update is sent to the server, hence the staleness of user $i$ is given by $\tau_i = t- t_i$. $\mathbfsl{x}^{(E)}_i$ denotes the local model after $E$ local SGD steps and the local model at user $i$ is updated as  
\begin{equation}\label{eq:onestep_localSGD}
    \mathbfsl{x}^{(e)}_i = \mathbfsl{x}^{(e-1)}_i - \eta_l g_i(\mathbfsl{x}^{(e-1)}_i;\xi_i)
\end{equation}
for $e=1,\ldots,E$, where $\mathbfsl{x}^{(0)}_i = \mathbfsl{x}^{(t_i)}$, $\eta_l$ denotes learning rate of the local updates.
$g_i(\mathbfsl{x};\xi_i)$ denotes the stochastic gradient with respect to the random sampling $\xi_i$ on user $i$, and we assume $\mathbb{E}_{\xi_i}[g_i(\mathbfsl{x};\xi_i)] = \nabla F_i(\mathbfsl{x})$ for all $\mathbfsl{x}\in\mathbb{R}^d$ where $F_i$ is the local loss function of user $i$ defined in \eqref{eq:objective_fnc}. 
The server stores the received local updates in a buffer of size $K$. When the buffer is full, the server updates the global model by subtracting the aggregate of all local updates from the current global model. Specifically, the global model at the server is updated as  
\begin{equation}\label{eq:global_update}
    \mathbfsl{x}^{(t+1)} = \mathbfsl{x}^{(t)} - \frac{\eta_g}{\sum_{i\in\mathcal{S}^{(t)}} s(t-t_i)} \sum_{i\in \mathcal{S}^{(t)}} s(t-t_i){\Delta}^{(t;t_i)}_i,
\end{equation}
where $\mathcal{S}^{(t)}$ is an index set of the $K$ users whose local models are in the buffer at round $t$ and $\eta_g$ is the learning rate of the global updates. $s(\tau)$ is a function that compensates for the staleness satisfying $s(0)=1$ and is monotonically decreasing as $\tau$ increases. 
There are many functions that satisfy these two properties and we consider a polynomial function $s_{\alpha}(\tau) = (\tau +1)^{-\alpha}$ as it shows similar or better performance than the other functions e.g., Hinge or Constant stale function~\cite{xie2019asynchronous}.

\textbf{Privacy and Dropout Model.} We assume at most $D$ users may dropout in any round and a threat model where the users and the server are honest but curious who follow the protocol but try to infer the local updates of the other users. 
The secure aggregation protocol guarantees that nothing beyond the aggregate of the local updates is revealed, even if up to $T$ users collude with the server. 
We consider information-theoretic privacy where from every subset of users $\mathcal{T} \subseteq [N]$ of size at most $T$, we must have mutual information $I(\{{\Delta}^{(t;t_i)}_i\}_{i \in [N]};\mathbf{Y}^{(t)}|\sum_{i \in \mathcal{S}^{(t)}}{\Delta}^{(t;t_i)}_i, \mathbf{Z}^{(t)}_{\mathcal{T}}) = 0$, where $\mathbf{Y}^{(t)}$ and $\mathbf{Z}^{(t)}_{\mathcal{T}}$ are the collection of information at the server and at the users in $\mathcal{T}$ at round $t$, respectively.


\subsection{Incompatibility  of \secaggspace with  Buffered Asynchronous FL}
\label{subsec:incompatibility}

As described in Section~\ref{subsec:secagg}, \secaggspace \cite{bonawitz2017practical} is designed for synchronous FL. At round $t$, each pair of users $i,j\in[N]$ agree on a pairwise random-seed $a_{i,j}^{(t)}$, and generate a random vector by running PRG based on the random seed of $a_{i,j}^{(t)}$ to mask the local update. 
This additive structure has the unique property that these pairwise random vectors cancel out when the server aggregates the masked models because user $i(<j)$ adds $\mathrm{PRG}(a_{i,j}^{(t)})$ to $\mathbfsl{x}_i^{(t)}$ and user $j(>i)$ subtracts $\mathrm{PRG}(a_{i,j}^{(t)})$ from $\mathbfsl{x}_j^{(t)}$.

In the buffered asynchronous FL, however, the cancellation of the pairwise random masks based on the key agreement protocol is not guaranteed due to the mismatch in staleness between users. Specifically, at round $t$, user $i\in\mathcal{S}^{(t)}$ sends the masked model $\mathbfsl{y}_i^{(t;t_i)}$ to the server that is given by
\begin{equation} \small
    \mathbfsl{y}_i^{(t;t_i)} = {\Delta}^{(t;t_i)}_i + \mathrm{PRG}\left(b_i^{(t_i)}\right)+\sum_{j: i<j} \mathrm{PRG}\left(a^{(t_i)}_{i, j}\right)-\sum_{j: i>j} \mathrm{PRG}\left(a^{(t_i)}_{j, i}\right),
\end{equation}
where ${\Delta}^{(t;t_i)}_i$ is the local update defined in \eqref{eq:local_update}.
When $t_i \neq t_j$, the pairwise random vectors in $\mathbfsl{y}_i^{(t;t_i)}$ and $\mathbfsl{y}_j^{(t;t_j)}$ are not canceled out as $a^{(t_i)}_{i, j} \neq a^{(t_j)}_{i, j}$.
We note that the identity of the staleness of each user is not known a priori, hence each pair of users cannot use the same pairwise random-seed.

\subsection{The Proposed \namespace Protocol}
\label{subsec:proposed}

To address the challenge of asynchrony in the buffered asynchronous secure aggregation, we propose \namespace by modifying the idea of \emph{one-shot} recovery leveraged in \LightSecAggspace \cite{LightSecAgg2021} to our setting. We provide a brief overview of \LightSecAggspace in Section \ref{subsec:lightsecagg}. Our key intuition is to encode the local masks in a way that the server can recover the aggregate of masks from the encoded masks via a one-shot computation even though the masks are generated in different training rounds.

\namespace has three phases. 
First, each user generates a random mask to protect the privacy of the local update, and further creates encoded masks via a  \emph{$T$-private} Maximum Distance Separable (MDS) code that provides privacy against $T$ colluding users. 
Each user sends one of the encoded masks to one of the other users for the purpose of one-shot recovery.
Second, each user trains a local model and converts it from the domain of real numbers to the finite field as generating random masks and MDS encoding are required to be carried out in the finite field to provide information-theoretic privacy. Then, the quantized model is masked by the random mask generated in the first phase, and sent to the server. The server stores the masked update in the buffer.  
Third, when the buffer is full, the server aggregates the $K$ masked updates in the buffer. 
To remove the randomness in the aggregate of the masked updates, the server reconstructs the aggregated masks of the users in the buffer. To do so, each surviving user sends the aggregate of encoded masks to the server. After receiving a sufficient number of aggregated encoded masks, the server reconstructs the aggregate of masks and hence the aggregate of the $K$ local updates. 
We now describe these three phases in detail. 

\subsubsection{Offline Encoding and Sharing of Local Masks}\label{subsubsec:BASecAgg_firstphase} 
User $i$ generates $\mathbfsl{z}_i^{(t_i)}$ uniformly at random from the finite field $\mathbb{F}^d_q$, where $t_i$ is the global round index when user $i$ downloads the global model from the server. The mask $\mathbfsl{z}_i^{(t_i)}$ is partitioned into $U-T$ sub-masks denoted by 
$[\mathbfsl z^{(t_i)}_i]_1, \cdots, [\mathbfsl z^{(t_i)}_i]_{U-T}$, where $U$ denotes the targeted number of surviving users and $N-D\geq U \geq T$. User $i$ also selects another $T$ random masks denoted by $[\mathbfsl n^{(t_i)}_i]_{U-T+1}, \cdots, [\mathbfsl n^{(t_i)}_i]_{U}$. These $U$ partitions $[\mathbfsl z^{(t_i)}_i]_1, \cdots, [\mathbfsl z^{(t_i)}_i]_{U-T}, [\mathbfsl n^{(t_i)}_i]_{U-T+1}, \cdots, [\mathbfsl n^{(t_i)}_i]_{U}$ are then encoded through an $(N, U)$ Maximum Distance Separable (MDS) code as follows 
\begin{align}\label{eq:encoding}
   [\widetilde{\mathbfsl z}^{(t_i)}_i]_j= \left([\mathbfsl z_i^{(t_i)}]_1, \cdots, [\mathbfsl z^{(t_i)}_i]_{U-T}, [\mathbfsl{n}^{(t_i)}_i]_{U-T+1}, \cdots, [\mathbfsl{n}^{(t_i)}_i]_{U} \right) \mathbfsl v_j,
\end{align}
where $\mathbfsl v_j$ is the $j$-th column of a Vandermonde matrix $\mathbf V \in \mathbb F_q^{U \times N}$. After that, user $i$ sends $[\widetilde{\mathbfsl z}^{(t_i)}_i]_j$ to user $j \in [N] \setminus \{i\}$.
At the end of this phase, each user $i\in[N]$ has $[\widetilde{\mathbfsl z}^{(t_j)}_j]_i$ from $j\in[N]$.


\subsubsection{Training, Quantizing, Masking, and Uploading of Local Updates}\label{subsubsec:BASecAgg_secondphase}
Each user $i$ trains the local model as in \eqref{eq:local_update} and \eqref{eq:onestep_localSGD}. 
User $i$ quantizes its local update ${\Delta}^{(t;t_i)}_i$ from the domain of real numbers to the finite field $\mathbb{F}_q$ as masking and MDS encoding are carried out in the finite field to provide information-theoretic privacy.
The field size $q$ is assumed to be large enough to avoid any wrap-around during secure aggregation.

The quantization is a challenging task as it should be performed in a way to ensure the convergence of the global model. Moreover, the quantization should allow the representation of negative integers in the finite field, and enable computations to be carried out in the quantized domain. Therefore, we cannot utilize well-known gradient quantization techniques such as in \cite{alistarh2017qsgd}, which represents the sign of a negative number separately from its magnitude.  
\namespace addresses this challenge with a simple stochastic quantization strategy combined with the two's complement representation as described next. 
For any positive integer $c\geq1$, we first define a stochastic rounding function as
\begin{equation}\label{eq:sto_round}
    Q_c(x) = 
    \left\{
    \begin{array}{ll}
          \frac{\lfloor cx \rfloor}{c}   & \text{with prob. } 1-(cx-\lfloor cx \rfloor)\\
          \frac{\lfloor cx \rfloor+1}{c} & \text{with prob. } cx-\lfloor cx \rfloor,
    \end{array} 
    \right. 
\end{equation}
where $\floor{x}$ is the largest integer less than or equal to $x$, and this rounding function is unbiased, i.e., $\mathbb{E}_Q[Q_c(x)]=x$. The parameter $c$ is a design parameter to determine the number of quantization levels. The variance of $Q_c(x)$ decreases as the value of $c$ increases, which will be described in Lemma~\ref{lemma:q_prop} in Appendix~\ref{app:convergence_proof} in detail.
We then define the quantized update
\begin{equation}\label{eq:def_w_bar}
    \overline{\Delta}^{(t;t_i)}_i := \phi\left({c_l}\cdot Q_{c_l}\left({\Delta}^{(t;t_i)}_i\right)\right),
\end{equation}
where the function $Q_c$ from~\eqref{eq:sto_round} is carried out element-wise, and $c_l$ is a positive integer parameter to determine the quantization level of the local updates.   
The mapping function $\phi:\mathbb{R}\rightarrow\mathbb{F}_q$ is defined to represent a negative integer in the finite field by using the two's complement representation, 
\begin{equation}\label{eq:phi} 
    \phi(x) =
    \left\{
    \begin{array}{ll}
          x & \text{if } x \geq 0\\
          q+x & \text{if } x<0.
    \end{array} 
    \right. 
\end{equation}
To protect the privacy of the local updates, user $i$ masks the quantized update $\overline{\Delta}^{(t;t_i)}_i$ in \eqref{eq:def_w_bar} as
\begin{equation}\label{eq:masked_update}
    \widetilde{\Delta}^{(t;t_i)}_i = \overline{\Delta}^{(t;t_i)}_i + \mathbfsl{z}_i^{(t_i)},
\end{equation}
and sends the pair of $\left\{\widetilde{\Delta}^{(t;t_i)}_i, t_i \right\}$ to the server.
The local round index $t_i$ will be used in two cases: (1) when the server identifies the staleness of each local update and compensates it, and (2) when the users aggregate the encoded masks for one-shot recovery, which will be explained in Section \ref{subsubsec:BASecAgg_thirdphase}.

\subsubsection{One-shot Aggregate-update Recovery and Global Model Update}\label{subsubsec:BASecAgg_thirdphase} 
The server stores $\widetilde{\Delta}^{(t;t_i)}_i$ in the buffer, and when the buffer of size $K$ is full the server aggregates the $K$ masked local updates. In this phase, the server intends to recover 
\begin{equation}
    \sum_{i\in\mathcal{S}^{(t)}} s(t-t_i) {\Delta}^{(t;t_i)}_i,    
\end{equation}
where ${\Delta}^{(t;t_i)}_i$ is the local update in the real domain defined in \eqref{eq:local_update}, $\mathcal{S}^{(t)}$ ($\left| \mathcal{S}^{(t)} \right|=K$) is the index set of users whose local updates are stored in the buffer and aggregated by the server at round $t$, 
and $s(\tau)$ is the staleness function defined in \eqref{eq:global_update}.
To do so, the first step is to reconstruct $\sum_{i\in\mathcal{S}^{(t)}} s(t-t_i)\mathbfsl{z}_i^{(t_i)}$. This is challenging as the decoding should be performed in the finite field, but the value of $s(\tau)$ is a real number.
To address this problem, we introduce a quantized staleness function $\overline{s}:\{0,1,\ldots,\}\rightarrow \mathbb{F}_q$,
\begin{equation}\label{eq:quantized_stale_function}
    \overline{s}_{c_g}(\tau) = c_g Q_{c_g}\left( s(\tau) \right),
\end{equation}
where $Q_c(\cdot)$ is a stochastic rounding function defined in \eqref{eq:sto_round}, and $c_g$ is a positive integer to determine the quantization level of staleness function. 
Then, the server broadcasts information of $\left\{ \mathcal{S}^{(t)}, \left\{ t_i\right\}_{i\in\mathcal{S}^{(t)}}, c_g \right\}$ to all surviving users. 
After identifying the selected users in $\mathcal{S}^{(t)}$, the local round indices $\{t_i\}_{i\in\mathcal{S}^{(t)}}$ and the corresponding staleness, user $j\in[N]$ aggregates its encoded sub-masks $\sum_{i\in\mathcal{S}^{(t)}} \overline{s}_{c_g}(t-t_i) \left[\widetilde{\mathbfsl z}^{(t_i)}_i\right]_j$
and sends it to the server for the purpose of one-shot recovery. The key difference between \namespace and \LightSecAggspace is that in \name, the time stamp $t_i$ for encoded masks $\left[\widetilde{\mathbfsl z}^{(t_i)}_i\right]_j$ for each $i\in\mathcal{S}^{(t)}$ can be different, hence user $j\in[N]$ must aggregate the encoded mask with the proper round index.
Due to the commutative property of coding and linear operations, each $\sum_{i\in\mathcal{S}^{(t)}} \overline{s}_{c_g}(t-t_i) \left[\widetilde{\mathbfsl z}^{(t_i)}_i\right]_j$ is an encoded version of $\sum_{i\in\mathcal{S}^{(t)}} \overline{s}_{c_g}(t-t_i) \left[{\mathbfsl z}^{(t_i)}_i\right]_k$ for $k\in[U-T]$ using the MDS matrix (or Vandermonde matrix) $\mathbf V$ defined in \eqref{eq:encoding}.
Thus, after receiving a set of any $U$ results from surviving users in $\mathcal{U}_2$, where $|\mathcal{U}_2|=U$, the server reconstructs $\sum_{i\in\mathcal{S}^{(t)}} \overline{s}_{c_g}(t-t_i) \left[{\mathbfsl z}^{(t_i)}_i\right]_k$ for $k\in[U-T]$ via MDS decoding.
By concatenating the $U-T$ aggregated sub-masks $\sum_{i\in\mathcal{S}^{(t)}} \overline{s}_{c_g}(t-t_i) \left[{\mathbfsl z}^{(t_i)}_i\right]_k$, the server can recover $\sum_{i\in\mathcal{S}^{(t)}} \overline{s}_{c_g}(t-t_i) {\mathbfsl z}^{(t_i)}_i$.
Finally, the server obtains the desired global update as follows
\begin{equation}
    \mathbfsl{g}^{(t)} =
    \frac{1}{c_g c_l \sum_{i\in\mathcal{S}^{(t)}} {s}_{c_g}(t-t_i)} \phi^{-1} \left( \sum_{i\in\mathcal{S}^{(t)}} \overline{s}_{c_g}(t-t_i)\widetilde{{\Delta}}^{(t;t_i)}_i - \sum_{i\in\mathcal{S}^{(t)}} \overline{s}_{c_g}(t-t_i) {\mathbfsl z}^{(t_i)}_i\right),
\end{equation}
where $c_l$ is defined in \eqref{eq:def_w_bar} and $\phi^{-1}:\mathbb{F}_q\rightarrow\mathbb{R}$ is the demapping function defined as follows
\begin{equation}\label{eq:inv_phi}
    {\phi}^{-1}(\overline{x})=
    \left\{
    \begin{array}{ll}
          \overline{x} & \text{if \quad } 0 \leq \overline{x} < \frac{q-1}{2}\\
          \overline{x}-q & \text{if \quad } \frac{q-1}{2} \leq \overline{x} < q.
    \end{array}
    \right.
\end{equation}
Finally, the server updates the global model as $\mathbfsl{x}^{(t+1)} = \mathbfsl{x}^{(t)} - \eta_g \mathbfsl{g}^{(t)}$, which is equivalent to
\begin{equation}\label{eq:global_update_equivalent}
    \mathbfsl{x}^{(t+1)} = \mathbfsl{x}^{(t)} - \frac{\eta_g}{\sum_{i\in\mathcal{S}^{(t)}} Q_{c_g}\left(s(t-t_i) \right)} \sum_{i\in \mathcal{S}^{(t)}} Q_{c_g}\left(s(t-t_i) \right) Q_{c_l}\left({\Delta}^{(t;t_i)}_i\right),
\end{equation}
where $Q_{c_l}$ and $Q_{c_g}$ are the stochastic rounding function defined in \eqref{eq:sto_round} with respect to quantization parameters $c_l$ and $c_g$, respectively.

\section{Convergence Analysis}\label{sec:Convergence}
In this section, we provide the convergence guarantee of \namespace in the $L$-smooth and non-convex setting. 
For simplicity, we consider the constant staleness function $s(\tau)=1$ for all $\tau$ in \eqref{eq:global_update_equivalent}. Then, the global update equation of \namespace is given by
\begin{equation}\label{eq:global_update_conv}
    \mathbfsl{x}^{(t+1)} = \mathbfsl{x}^{(t)} - \frac{\eta_g}{K} \sum_{i\in \mathcal{S}^{(t)}}  Q_{c_l}\left({\Delta}^{(t;t_i)}_i\right),
\end{equation}
where $Q_{c_l}$ is the stochastic round function defined in \eqref{eq:sto_round}, $c_l$ is the positive constant to determine the quantization level, and ${\Delta}^{(t;t_i)}_i$ is the local update of user $i$ defined in \eqref{eq:local_update}.
We first introduce our assumptions, which are commonly made in analyzing FL algorithms \cite{li2019convergence, nguyen2021federated, reddi2020adaptive, so2021securing}.

\begin{assumption} \label{assumpt:1} (Unbiasedness of local SGD).
For all $i\in[N]$ and $\mathbfsl{x}\in\mathbb{R}^d$, $\mathbb{E}_{\xi_i}[g_i(\mathbfsl{x};\xi_i)] = \nabla F_i(\mathbfsl{x})$ where $g_i(\mathbfsl{x};\xi_i)$ is the stochastic gradient estimator of user $i$ defined in \eqref{eq:onestep_localSGD}.
\end{assumption}

\begin{assumption} \label{assumpt:2} (Lipschitz gradient). 
$F_1,\ldots,F_N$ in \eqref{eq:objective_fnc} are all $L$-smooth: for all $\mathbfsl{a},\mathbfsl{b}\in\mathbb{R}^d$ and $i\in[N]$, 
$\lVert \nabla F_i(\mathbfsl{a}) - \nabla F_i(\mathbfsl{b}) \rVert^2 \leq L \lVert \mathbfsl{a}-\mathbfsl{b} \rVert^2$.
\end{assumption}

\begin{assumption}\label{assumpt:3} (Bounded variance of local and global gradients). 
The variance of the stochastic gradients at each user is bounded, i.e., $\mathbb{E}_{\xi_i} \left\lVert \nabla g_i(\mathbfsl{x};\xi_i) - \nabla F_i(\mathbfsl{x}) \right\rVert^2 \leq \sigma^2_l$ for $i\in [N]$ and $\mathbfsl{x}\in\mathbb{R}^d$.
For the global loss function $F(\mathbfsl{x})$ defined in \eqref{eq:objective_fnc}, $\frac{1}{N}\sum_{i=1}^{N} \left\lVert \nabla F_i(\mathbfsl{x}) - \nabla F(\mathbfsl{x}) \right\rVert^2 \leq \sigma^2_{g}$ holds.
\end{assumption}

\begin{assumption}\label{assumpt:4}
(Bounded gradient). For all $i\in [N]$, $\lVert\nabla F_i(\mathbfsl{x})\rVert^2 \leq G$.
\end{assumption}

In addition, we make an assumption on the staleness of the local updates under asynchrony~\cite{nguyen2021federated}.
\begin{assumption}\label{assumpt:5}(Bounded staleness).
For each global round index $t$ and all users $i\in[N]$, the delay $\tau_i^{(t)} = t - t_i$ is not larger than a certain threshold $\tau_{\mathrm{max}}$ where $t_i$ is the latest round index when the global model is downloaded to user $i$.
\end{assumption}


Now, we state our main result for the convergence guarantee of \name. 

\begin{theorem}\label{thm:convergence}
Selecting the constant learning rates $\eta_l$ and $\eta_g$ such that $\eta_l \eta_g K E \leq \frac{1}{L}$, the global model iterates in \eqref{eq:global_update_conv} achieve the following ergodic convergence rate
\begin{equation}\label{eq:conv_rate}
    \frac{1}{J} \sum_{t=0}^{J-1} \mathbb{E} \left[ \lvert \nabla F(\mathbfsl{x}^{(t)}) \rvert^2 \right]
    \leq \frac{2F^{*}}{\eta_g \eta_l E K T} + \frac{L \eta_g \eta_l \sigma^2_{c_l}}{2}
    + 3L^2 E^2 \eta_l^2 \left( \eta_g^2 K^2 \tau^2_{\mathrm{max}} \right) \sigma^2,
\end{equation}
where $F^{*}=F(\mathbfsl{x}^{(0)}) - F(\mathbfsl{x}^{*})$, $\sigma^2 = G + \sigma_g^2 + \sigma_{c_l}^2$, and $\sigma_{c_l}^2 = \frac{d}{4{c_l}^2} + \sigma^2_l$.
\end{theorem}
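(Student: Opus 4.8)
The plan is to run the standard $L$-smooth descent argument on the global iterates in \eqref{eq:global_update_conv}, treating the stochastic rounding as an extra unbiased perturbation layered on top of the stale local-SGD updates, and then to telescope over the $J$ rounds. Before the descent step I would record two facts. First, by the unbiasedness of the rounding in \eqref{eq:sto_round}, conditioned on the local iterates we have $\mathbb{E}_Q[Q_{c_l}(\Delta_i^{(t;t_i)})] = \Delta_i^{(t;t_i)}$, so the quantized direction $\mathbf{d}^{(t)} := \frac{1}{K}\sum_{i\in\mathcal{S}^{(t)}} Q_{c_l}(\Delta_i^{(t;t_i)})$ is, in conditional expectation, exactly the averaged unquantized buffered update. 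Second, by Lemma~\ref{lemma:q_prop} the per-coordinate rounding variance is at most $\tfrac{1}{4c_l^2}$, giving $\mathbb{E}_Q\|Q_{c_l}(\Delta_i)-\Delta_i\|^2 \le \tfrac{d}{4c_l^2}$; combined with the local-gradient variance bound of Assumption~\ref{assumpt:3} this is what assembles the constant $\sigma_{c_l}^2 = \tfrac{d}{4c_l^2}+\sigma_l^2$. Unrolling \eqref{eq:onestep_localSGD} gives $\Delta_i^{(t;t_i)} = \eta_l\sum_{e=0}^{E-1} g_i(\mathbf{x}_i^{(e)};\xi_i)$, whose expectation over the sampling is $\eta_l\sum_{e=0}^{E-1}\nabla F_i(\mathbf{x}_i^{(e)})$ by Assumption~\ref{assumpt:1}.

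I would then apply $L$-smoothness (Assumption~\ref{assumpt:2}) to obtain $F(\mathbf{x}^{(t+1)}) \le F(\mathbf{x}^{(t)}) - \eta_g\langle \nabla F(\mathbf{x}^{(t)}),\mathbf{d}^{(t)}\rangle + \tfrac{L\eta_g^2}{2}\|\mathbf{d}^{(t)}\|^2$ and take a tower of expectations, first over the rounding, then over the SGD sampling, then over the history. For the cross term I substitute the unrolled conditional mean, add and subtract $\nabla F(\mathbf{x}^{(t)})$ inside the sum, and extract a leading negative term proportional to $\eta_g\eta_l E\,\mathbb{E}\|\nabla F(\mathbf{x}^{(t)})\|^2$ (using that the buffered average of $\nabla F_i$ approximates $\nabla F$, with the population mismatch absorbed via the global-variance bound $\sigma_g^2$), plus drift errors measuring how far the stale local iterates $\mathbf{x}_i^{(e)}$ have moved from $\mathbf{x}^{(t)}$; these are controlled by $L$-smoothness in terms of $\|\mathbf{x}_i^{(e)}-\mathbf{x}^{(t)}\|^2$. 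For the second-order term I bound $\mathbb{E}\|\mathbf{d}^{(t)}\|^2$ by adding the rounding variance $\tfrac{d}{4c_l^2}$ to the SGD variance and the gradient magnitude (Assumptions~\ref{assumpt:3}--\ref{assumpt:4}), which yields the $\tfrac{L\eta_g\eta_l\sigma_{c_l}^2}{2}$ contribution, the second term of \eqref{eq:conv_rate}.

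The crux is the drift/staleness bound. I would split $\mathbf{x}_i^{(e)}-\mathbf{x}^{(t)}$ into an intra-local part (the $e\le E$ SGD steps away from $\mathbf{x}^{(t_i)}$) and an inter-round staleness part $\mathbf{x}^{(t_i)}-\mathbf{x}^{(t)}$. The first is $O(\eta_l^2 E^2)$ times a gradient/variance quantity. The second, invoking Assumption~\ref{assumpt:5} ($t-t_i\le\tau_{\max}$), is a sum of at most $\tau_{\max}$ global increments, each of squared size $\eta_g^2\|\mathbf{d}^{(\cdot)}\|^2 = O\!\big(\eta_g^2 K^2(G+\sigma_g^2+\sigma_{c_l}^2)\big)$ after bounding each buffered term by Jensen and Assumptions~\ref{assumpt:3}--\ref{assumpt:4} together with the rounding variance. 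A Cauchy--Schwarz step over the $\tau_{\max}$ increments then produces the factor $\eta_g^2 K^2\tau_{\max}^2\sigma^2$ with $\sigma^2 = G+\sigma_g^2+\sigma_{c_l}^2$, which, carrying the $3L^2E^2\eta_l^2$ prefactor from the smoothness expansions, is exactly the third term of \eqref{eq:conv_rate}.

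Finally I would collect everything into the one-step inequality $\mathbb{E}[F(\mathbf{x}^{(t+1)})] \le \mathbb{E}[F(\mathbf{x}^{(t)})] - c\,\eta_g\eta_l E\,\mathbb{E}\|\nabla F(\mathbf{x}^{(t)})\|^2 + (\text{the two error terms})$, where the step-size condition $\eta_l\eta_g K E\le \tfrac1L$ is what guarantees the second-order expansion is dominated by the first-order descent so that the coefficient $c>0$ stays positive. Telescoping over $t=0,\dots,J-1$, using $\sum_t\big(\mathbb{E}F(\mathbf{x}^{(t)})-\mathbb{E}F(\mathbf{x}^{(t+1)})\big)\le F(\mathbf{x}^{(0)})-F(\mathbf{x}^*)=F^*$, and dividing by $J$ and the accumulated coefficient of $\mathbb{E}\|\nabla F\|^2$ yields the optimization-error term $\propto \tfrac{F^*}{\eta_g\eta_l E K\,J}$ matching the first term of \eqref{eq:conv_rate}. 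I expect the main obstacle to be precisely this staleness analysis: decoupling the rounding noise, the $E$-step inner drift, and the $\tau_{\max}$-round outer staleness so that they recombine into the single constant $\sigma^2$, while simultaneously keeping the $\|\nabla F(\mathbf{x}^{(t)})\|^2$ coefficient negative under only the stated constraint $\eta_l\eta_g K E\le \tfrac1L$.
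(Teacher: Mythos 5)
Your proposal is correct in substance, but it takes a noticeably longer route than the paper does. The paper's entire proof consists of Lemma~\ref{lemma:q_prop} --- establishing that the stochastically rounded estimator $Q_{c_l}(g(\mathbfsl{x},\xi))$ remains unbiased and has variance bounded by $\sigma_{c_l}^2 = \frac{d}{4c_l^2}+\sigma_l^2$ --- followed by a one-line reduction: since the update rule \eqref{eq:global_update_conv} is identical to \FedBuff's except for this extra unbiased, bounded-variance randomness, the convergence rate follows from Corollary~1 of \cite{nguyen2021federated} by replacing $\mathbb{E}_\xi$ with $\mathbb{E}_{Q_{c_l},\xi}$ and $\sigma_l^2$ with $\sigma_{c_l}^2$. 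The ``two facts'' you record at the outset are exactly the content of that lemma, so you have captured the only genuinely new ingredient; everything after that in your sketch (the $L$-smoothness descent step, the intra-local drift of order $\eta_l^2E^2$, the $\tau_{\max}$-round staleness bound via Cauchy--Schwarz, the telescoping) is a re-derivation of the cited \FedBuffspace analysis rather than something the paper itself carries out. Your route buys self-containedness and makes visible where the quantization variance enters (both the second-order term and the drift constant $\sigma^2$, which you correctly note must contain $\sigma_{c_l}^2$ rather than $\sigma_l^2$); the paper's route buys brevity and makes the ``only the variance constant changes'' message explicit. One small point in your favor: your telescoping naturally produces a denominator $\eta_g\eta_l EK\,J$ in the first term, whereas the theorem as printed has $T$ there --- this appears to be a typo in the paper (elsewhere $T$ is the privacy parameter), and your $J$ is what the derivation actually yields.
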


\noindent The proof of Theorem \ref{thm:convergence} is provided in Appendix \ref{app:convergence_proof}.

\begin{remark} \normalfont \label{remark:quantization_var}
Theorem \ref{thm:convergence} shows that convergence rates of \namespace and \FedBuffspace (see Corollary 1 in \cite{nguyen2021federated}) are the same except for the increased variance of the local updates due to the quantization noise in \name. 
The amount of the increased variance $\frac{d}{4{c_l}^2}$ in $\sigma_{c_l}^2 = \frac{d}{4{c_l}^2} + \sigma^2_l$ is negligible for large ${c_l}$, which will be demonstrated in our experiments in Section \ref{sec:Experiments}.
\end{remark}

\section{Experiments}\label{sec:Experiments}
    
    
    
    

In this section, we demonstrate the convergence performance of \namespace compared to the buffered asynchronous FL scheme from \cite{nguyen2021federated} termed \FedBuff. We measure the performance in terms of the model accuracy evaluated over the test samples with respect to the global round index $t$.

\noindent {\bf Datasets and network architectures.} We consider an image classification task on the MNIST dataset \cite{lecun1998mnist} and CIFAR-10 dataset \cite{krizhevsky2009learning}.
For MNIST dataset, we train LeNet \cite{lecun1998mnist}. For CIFAR-10 dataset, we train the convolutional neural network (CNN) used in \cite{xie2019asynchronous}. 
These network architectures are sufficient for our needs as our goal is to evaluate various schemes, not to achieve the best accuracy.
More details about hyperparameters are provided in Appendix \ref{app:exp_details}.

\noindent {\bf Setup.} We consider a buffered asynchronous FL setting with $N=100$ users and a single server having the buffer of size $K=10$. For IID data distribution, the training samples are shuffled and partitioned into $N=100$ users. 
For asynchronous training, we assume the staleness of each user is uniformly distributed over $[0,10]$, i.e., $\tau_{\mathrm{max}}=10$, as used in \cite{xie2019asynchronous}.
We set the field size $q=2^{32}-5$, which is the largest prime within $32$ bits. 
 
\noindent {\bf Implementations.} 
We implement two schemes, \FedBuffspace and \name. 
The key difference between two schemes is that in \name, the local updates are quantized and converted into the finite field to provide privacy of the individual local updates while all operations are carried out over the domain of real numbers in \FedBuff.
For both schemes, to compensate the staleness of the local updates, we employ the two strategies for the weighting function: a constant function $s(\tau)=1$ and a polynomial function $s_\alpha(\tau)=(1+\tau)^{-\alpha}$. 

\noindent {\bf Empirical results.}
In Figure \ref{fig:CNN_MNIST} and \ref{fig:CNN_CIFAR10}, we demonstrate that \namespace has almost the same performance as \FedBuffspace on both MNIST and CIFAR-10 datasets while \namespace includes quantization noise to protect the privacy of individual local updates of users. 
This is because the quantization noise in \namespace is negligible as explained in Remark \ref{remark:quantization_var}.
To compensate the staleness of the local updates over the finite field in \name, we implement the quantized staleness function defined in \eqref{eq:quantized_stale_function} with $c_g=2^6$, which has the same performance to mitigate the staleness as the original staleness function carried out over the domain of real numbers. 

\begin{figure*}[t!]
\centering
    \subfigure[MNIST dataset.]{\label{fig:CNN_MNIST}
    \includegraphics[scale=0.45]{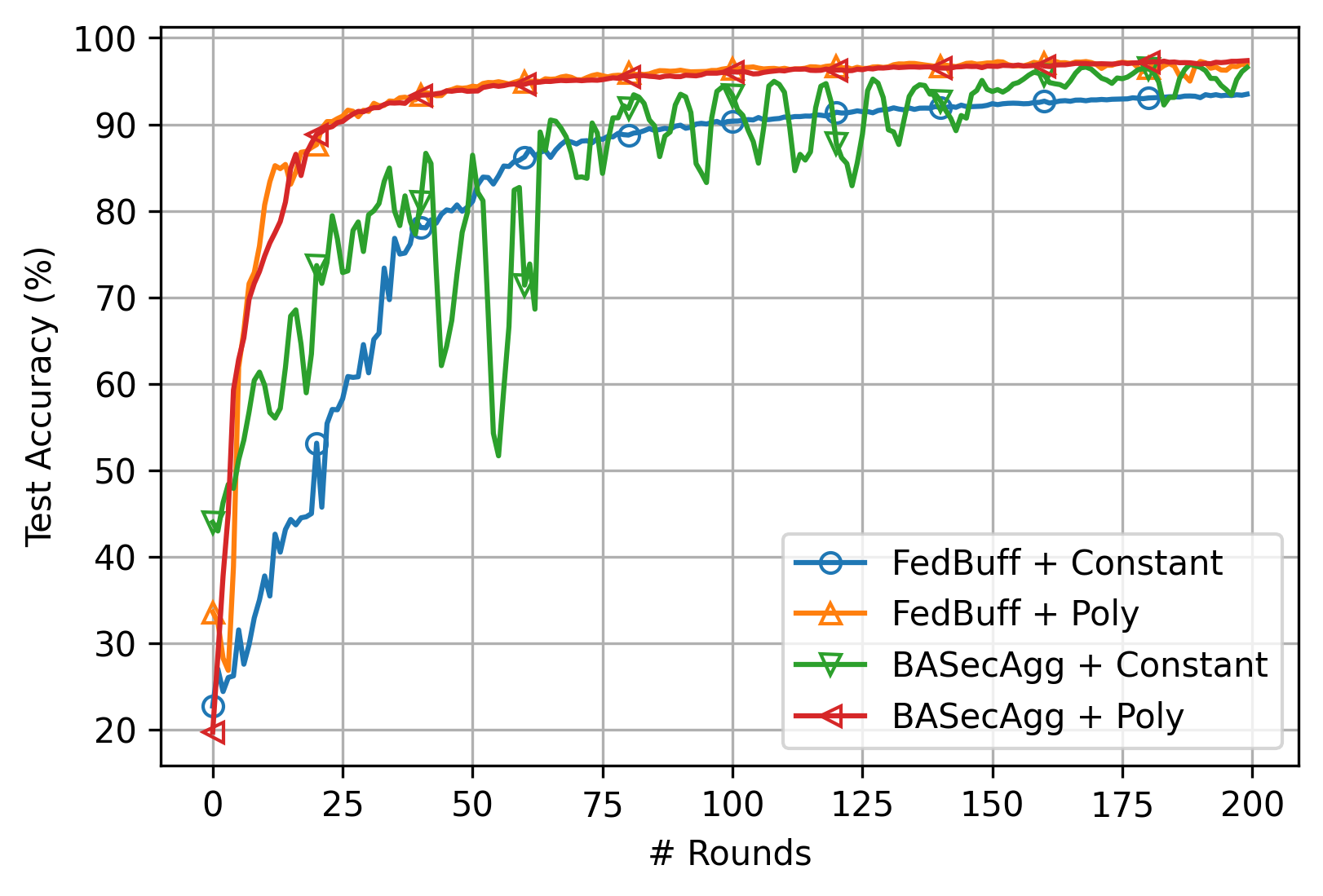}
    }
    \subfigure[CIFAR-$10$ dataset.]{\label{fig:CNN_CIFAR10}
    \includegraphics[scale=0.45]{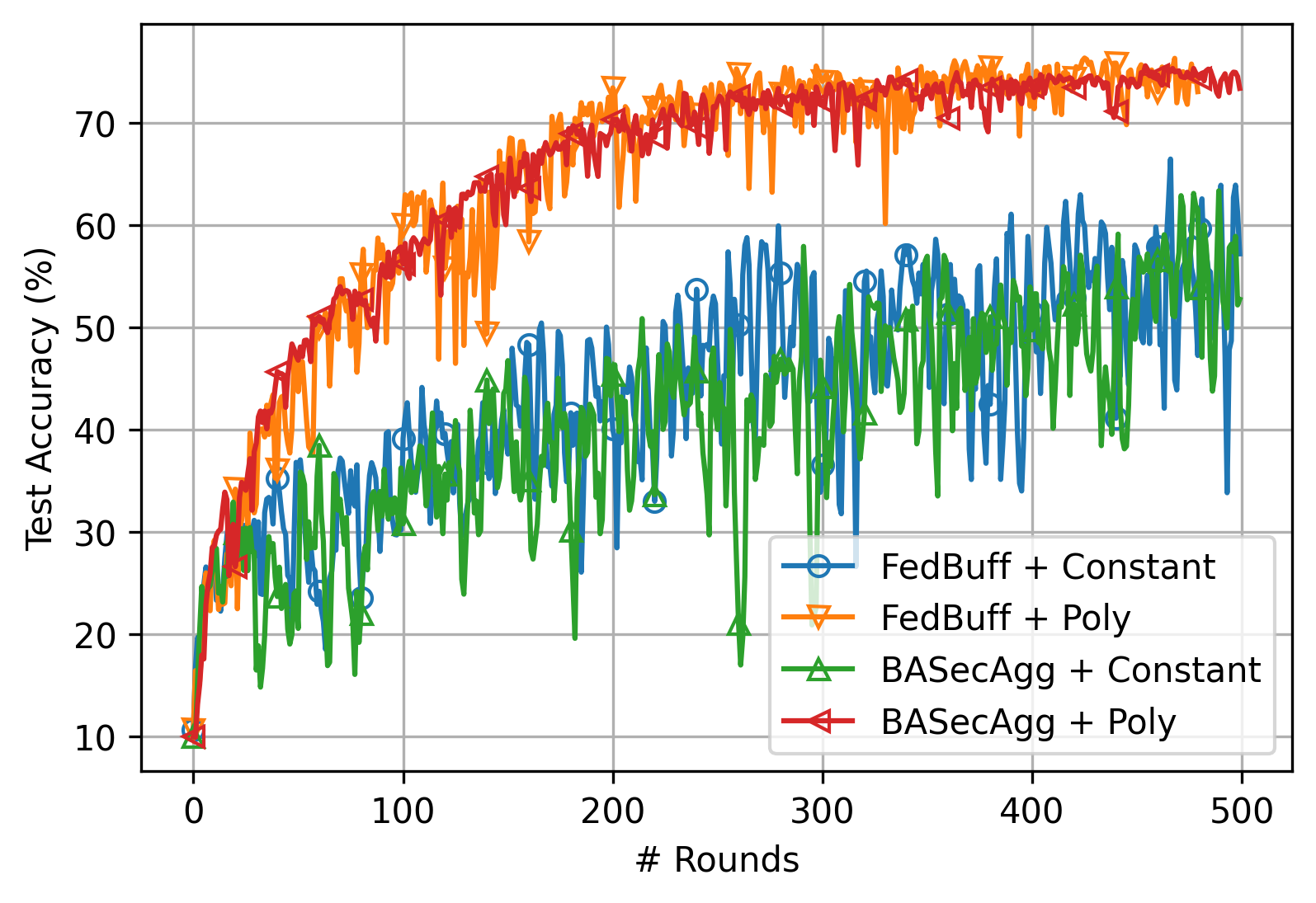}
    }
\vspace{-5 pt}
\caption{\footnotesize Accuracy of \namespace and \FedBuffspace with two strategies for the weighting function to mitigate the staleness: a constant function $s(\tau)=1$ (no compensation) named \emph{Constant}; and a polynomial function $s_\alpha(\tau)=(1+\tau)^{-\alpha}$ named \emph{Poly} where $\alpha=1$.}
\label{fig:accuracy}
\vspace{-0.15cm}
\end{figure*}

\begin{figure*}[t!]
\centering
    \subfigure[MNIST dataset.]{\label{fig:MNIST_diff_q_level}
    \includegraphics[scale=0.45]{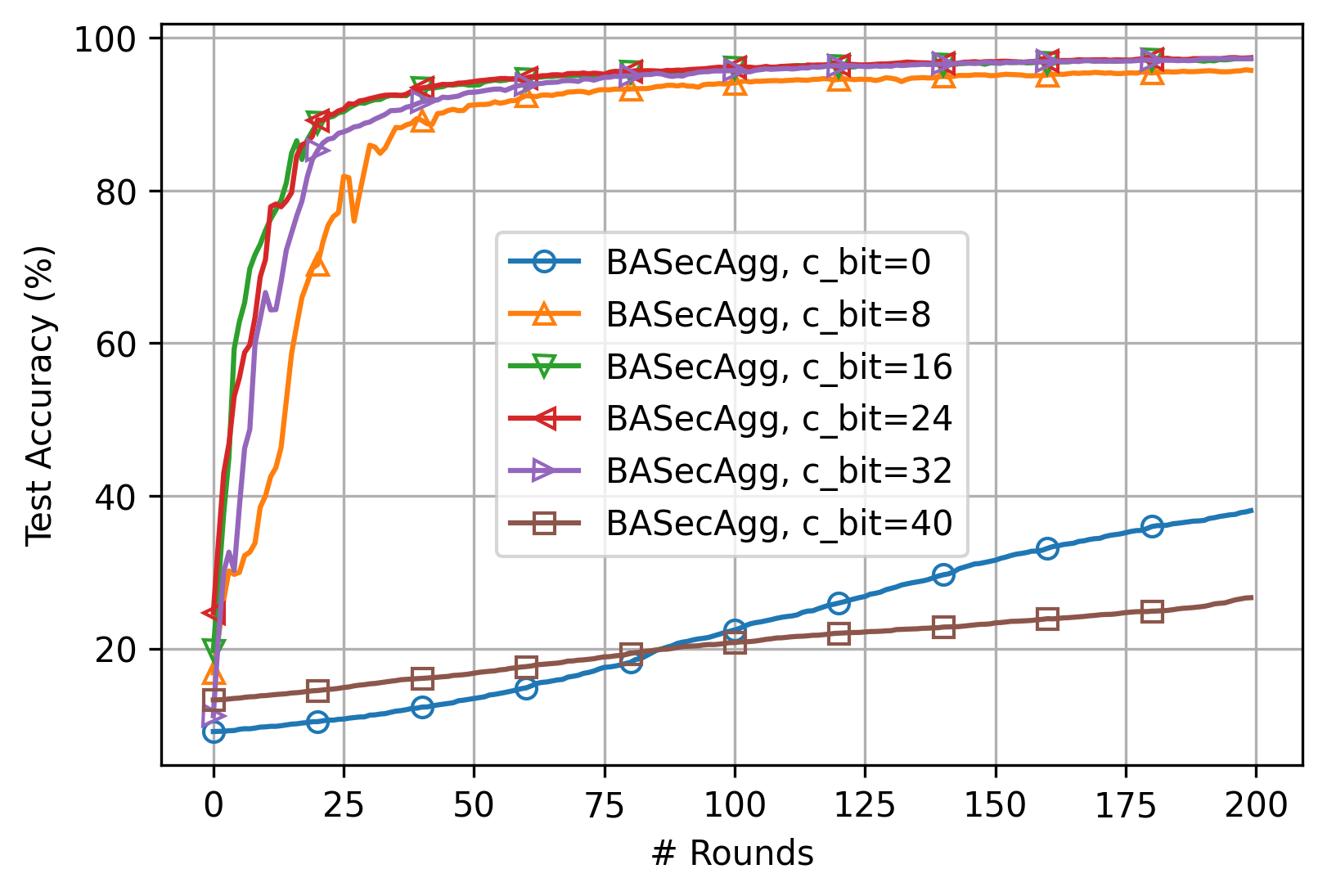}
    }
    \subfigure[CIFAR-$10$ dataset.]{\label{fig:CIFAR10_diff_q_level}
    \includegraphics[scale=0.45]{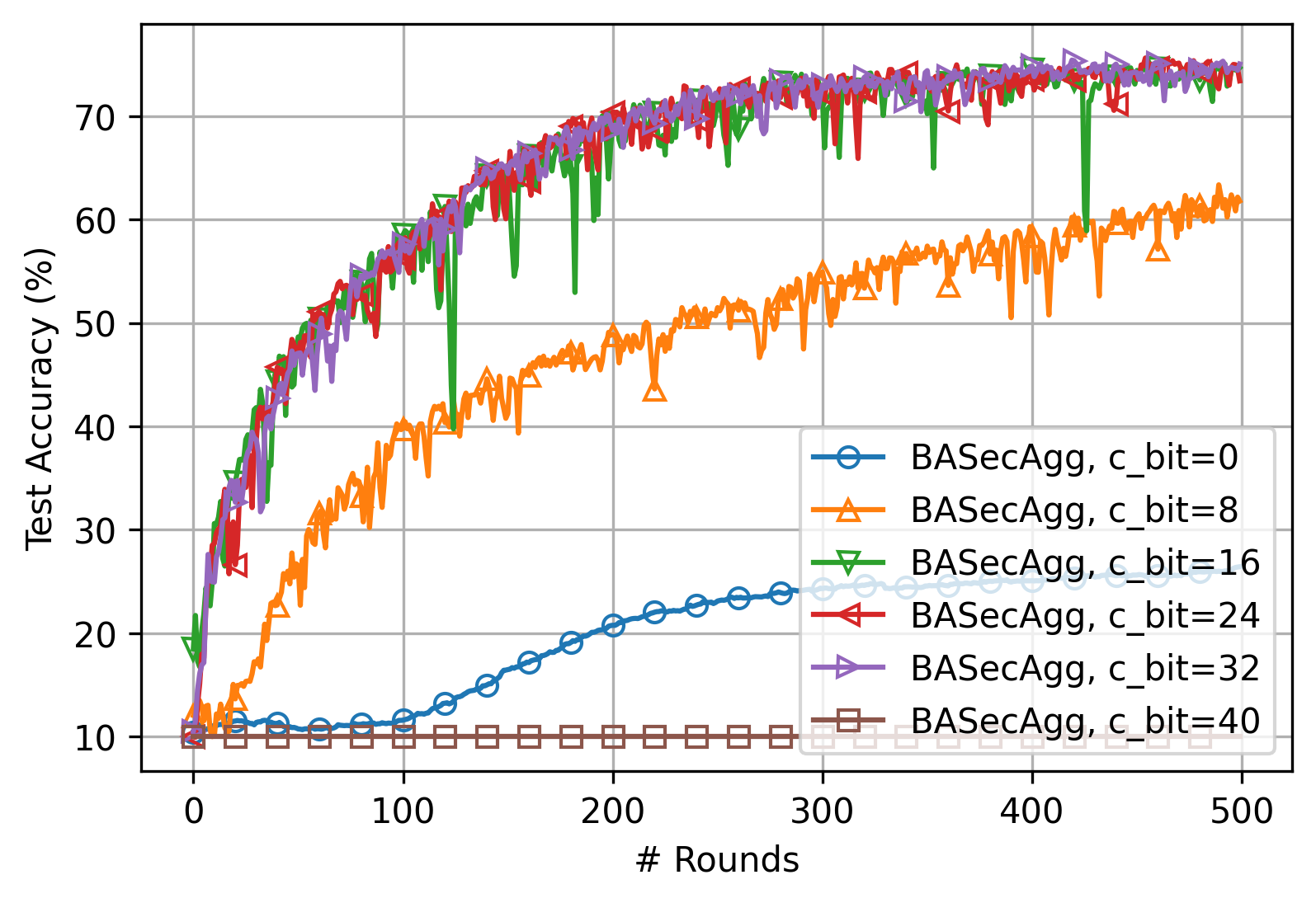}
    }
\vspace{-5 pt}
\caption{\footnotesize Accuracy of \namespace and \FedBuffspace with various values of quantization parameter $c_l = 2^{c_{bit}}$.}
\label{fig:diff_q_level}
\vspace{-0.15cm}\end{figure*}

\vspace{5 pt}
\noindent {\bf Performance with various quantization levels.}
To investigate the impact of the quantization, we measure the performance with various values of quantization parameter $c_l$ on MNIST and CIFAR-10 datasets in Fig. \ref{fig:diff_q_level}. 
We can observe that $c_l = 2^{16}$ has the best performance while small or large value of $c_l$ has the poor performance. 
This is because the value of $c_l$ provides a trade-off between two sources of quantization noise: 1) the rounding error from the stochastic rounding function defined in \eqref{eq:sto_round} and 2) the wrap-around error when modulo operations are carried out in the finite field. 
When $c_l$ has small value the rounding error is dominant while the wrap-around error is dominant when $c_l$ has large value. 
To find a proper value of $c_l$, we can utilize the auto-tuning algorithm proposed in \cite{bonawitz2019federated}. 


\section{Conclusions}
In this paper, we have proposed a buffered asynchronous secure aggregation protocol (\name) that is not based on TEEs. The independence of TEEs allows \namespace to have any buffer size unlike \FedBuff. The crux of \namespace is that it designs the masks of the users such that they cancel out in the buffer even if they belong to different training rounds. Our convergence analysis and experiments show that \namespace almost has the same convergence guarantees as \FedBuff. 

\bibliographystyle{plain}
\bibliography{main} 

\clearpage
\newpage

\appendix



\section{Theoretical Guarantees of \name: Proof of Theorem \ref{thm:convergence}}
\label{app:convergence_proof}
The proof of Theorem \ref{thm:convergence} directly follows from the following useful lemma that shows the unbiasedness and bounded variance still hold for the quantized gradient estimator $Q_c(g(\mathbfsl{x},\xi))$ for any  $\mathbfsl{x}\in\mathbb{R}^d$.
\begin{lemma}\label{lemma:q_prop}
    For the quantized gradient estimator $Q_c(g(\mathbfsl{x},\xi))$ with a given vector $\mathbfsl{x}\in\mathbb{R}^d$
    where $\xi$ is a uniform random variable representing the sample drawn, $g$ is a gradient estimator such that $\mathbb{E}_\xi [g(\mathbfsl{x},\xi)]=\nabla F(\mathbfsl{x})$ and $\mathbb{E}_\xi \lVert g(\mathbfsl{x},\xi) - \nabla F(\mathbfsl{x}) \rVert^2 \leq \sigma_l^2$, and the stochastic rounding function $Q_c$ is given in~\eqref{eq:sto_round}, the following holds,
    \begin{align}
        \mathbb{E}_{Q,\xi} [Q_c(g(\mathbfsl{x},\xi))] &= \nabla F(\mathbfsl{x}) \label{unbiased}\\
        \mathbb{E}_{Q,\xi} \lVert Q_c(g(\mathbfsl{x},\xi)) - \nabla F(\mathbfsl{x}) \rVert^2
        &\leq \sigma_c^{2}, \label{variance}
    \end{align}
    where $\sigma_c^2 = \frac{d}{4c^2} + \sigma^2_l$.
\end{lemma}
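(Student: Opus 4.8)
The plan is to prove both claims by conditioning on the sampling randomness $\xi$ and exploiting the independence of the stochastic rounding noise from the gradient noise. First I would establish the unbiasedness \eqref{unbiased} via the tower property: since $Q_c$ acts coordinate-wise and the scalar rounding rule in \eqref{eq:sto_round} is unbiased, we have $\mathbb{E}_Q[Q_c(g(\mathbfsl{x},\xi)) \mid \xi] = g(\mathbfsl{x},\xi)$, so taking the outer expectation over $\xi$ gives $\mathbb{E}_{Q,\xi}[Q_c(g(\mathbfsl{x},\xi))] = \mathbb{E}_\xi[g(\mathbfsl{x},\xi)] = \nabla F(\mathbfsl{x})$.

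For the variance bound \eqref{variance}, I would use a bias–variance style decomposition by writing $Q_c(g) - \nabla F = (Q_c(g) - g) + (g - \nabla F)$. Expanding the squared norm produces a rounding term $\mathbb{E}_{Q,\xi}\lVert Q_c(g)-g\rVert^2$, a sampling term $\mathbb{E}_\xi\lVert g-\nabla F\rVert^2$, and a cross term. The cross term vanishes because, conditioned on $\xi$, the rounding noise $Q_c(g)-g$ has zero mean while $g-\nabla F$ is deterministic; formally $\mathbb{E}_{Q,\xi}\langle Q_c(g)-g, g-\nabla F\rangle = \mathbb{E}_\xi\langle \mathbb{E}_Q[Q_c(g)-g \mid \xi], g-\nabla F\rangle = 0$. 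The sampling term is bounded by $\sigma_l^2$ directly by hypothesis.

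The crux of the argument is bounding the rounding term. I would compute the per-coordinate variance of the stochastic rounding: for a scalar $x$, $Q_c(x)$ is a two-point random variable taking the values $\lfloor cx\rfloor/c$ and $(\lfloor cx\rfloor+1)/c$, which differ by $1/c$, with probabilities $1-p$ and $p$, where $p = cx - \lfloor cx\rfloor \in [0,1)$. Hence $\mathrm{Var}(Q_c(x)) = p(1-p)/c^2 \leq 1/(4c^2)$, using $p(1-p) \leq 1/4$. Since the rounding is applied independently across the $d$ coordinates, $\mathbb{E}_Q\lVert Q_c(g)-g\rVert^2 = \sum_{j=1}^d \mathrm{Var}(Q_c(g_j)) \leq d/(4c^2)$. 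Combining the three pieces yields $\mathbb{E}_{Q,\xi}\lVert Q_c(g)-\nabla F\rVert^2 \leq d/(4c^2) + \sigma_l^2 = \sigma_c^2$.

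I expect the main obstacle to be bookkeeping rather than conceptual: correctly separating the two independent sources of randomness so that the cross term provably vanishes, and verifying that the uniform bound $p(1-p)\leq 1/4$ holds regardless of the fractional part of $cx$. Everything else follows from the stated unbiasedness of $Q_c$ and the assumed moment bound on the gradient estimator $g$.
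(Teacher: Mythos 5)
Your proof is correct and follows essentially the same route as the paper's: tower property for unbiasedness, the per-coordinate rounding variance bound $p(1-p)/c^2 \leq 1/(4c^2)$ summed over the $d$ coordinates, and a decomposition of the error into rounding and sampling terms. If anything, your justification of the middle step is more careful than the paper's, which invokes a ``triangle inequality'' where the correct reason the inequality holds without a factor of $2$ is exactly the one you give --- the cross term vanishes because the rounding noise is conditionally zero-mean given $\xi$.
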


\begin{proof}
(Unbiasedness). 
Given $Q_c$ in~\eqref{eq:sto_round} and any random variable $x$, it follows that,
\begin{align}
    \mathbb{E}_Q \left[Q_c(x) \mid x\right] 
    =&\; \frac{\lfloor cx \rfloor}{c} \left(1-(cx-\lfloor cx \rfloor)\right) + \frac{(\lfloor cx \rfloor+1)}{c} (cx-\lfloor cx \rfloor) \notag \\
    =&\; x 
\end{align}
from which we obtain the unbiasedness condition in \eqref{unbiased},
\begin{align}
    \mathbb{E}_{Q,\xi} [Q_c(g(\mathbfsl{x},\xi))] 
    &= \mathbb{E}_{\xi}\big[ \mathbb{E}_{Q}[Q_c(g(\mathbfsl{x},\xi))\mid g(\mathbfsl{x},\xi)] \big] \notag \\
    &= \mathbb{E}_{\xi}\big[ g(\mathbfsl{x},\xi) \big] \notag \\
    &= \nabla F(\mathbfsl{x}). 
\end{align}

\noindent    
(Bounded variance). 
Next, we observe that,
\begin{align}
    &\mathbb{E}_Q \left[ \big(Q_c(x) - \mathbb{E}_Q[Q_c(x)\mid x]\big)^2 \mid x \right] \notag \\
    &\quad = \left(\frac{\lfloor cx \rfloor}{c} - x\right)^2 (1-(cx-\lfloor cx \rfloor)) 
    + \left(\frac{\lfloor cx \rfloor+1}{c}-x \right)^2(cx-\lfloor cx \rfloor) \notag \\
    &\quad = \frac{1}{c^2}\left(\frac{1}{4} - \left(cx-\lfloor cx \rfloor -\frac{1}{2}\right)^2 \right) \notag \\
    &\quad \leq \frac{1}{4c^2} \label{eq:lem1_ineq1}
\end{align}
from which we obtain the bounded variance condition in \eqref{variance} as follows,
\begin{align}
    &\mathbb{E}_{Q,\xi} \lVert Q_c(g(\mathbfsl{x},\xi)) - \nabla F(\mathbfsl{x}) \rVert^2 \notag \\
    &\quad = \mathbb{E}_{\xi}\big[ \mathbb{E}_{Q}[ \lVert Q_c(g(\mathbfsl{x},\xi)) - \nabla F(\mathbfsl{x}) \rVert^2 \mid g(\mathbfsl{x},\xi)] \big] \notag \\
    &\quad \leq \mathbb{E}_{\xi}\big[ \mathbb{E}_{Q}[ \lVert Q_c(g(\mathbfsl{x},\xi)) - g(\mathbfsl{x},\xi) \rVert^2 \mid g(\mathbfsl{x},\xi)] \big] 
     + \mathbb{E}_{\xi}\big[ \mathbb{E}_{Q}[ \lVert g(\mathbfsl{x},\xi) - \nabla F(\mathbfsl{x}) \rVert^2 \mid g(\mathbfsl{x},\xi)] \big] \label{eq:triangle_ineq} \\
    &\quad \leq \frac{d}{4c^2} + \sigma^2_l  \label{eq:lem1_ineq2} \\
    &\quad = \sigma^{2}_c, \notag
\end{align}
where \eqref{eq:triangle_ineq} follows from the triangle inequality and \eqref{eq:lem1_ineq2} follows form~\eqref{eq:lem1_ineq1}.
\end{proof}

Now, the update equation of \namespace is equivalent to the update equation of \FedBuffspace except that \namespace has an additional random source, stochastic quantization $Q_{c_l}$, which also satisfies the unbiasedness and bounded variance. 
One can show the convergence rate of \namespace presented in Theorem \ref{thm:convergence} by exchanging $\mathbf{E}_\xi$ and variance-bound $\sigma^2_l$ in \cite{nguyen2021federated} with $\mathbf{E}_{Q_{c_l}, \xi}$ and variance-bound $\sigma^2_{c_l}= \frac{d}{4{c_l}^2} + \sigma^2_l$, respectively.

\section{Experiment Details}
\label{app:exp_details}
In this appendix, we provide more details about the experiments of Section \ref{sec:Experiments}.

\noindent {\bf Hyperparameters.}
For all experiments, we tune the hyperparameters based on the validation accuracy for each dataset by partitioning $20\%$ of the training samples into the validation dataset. We use mini-batch SGD for all tasks with a mini-batch size of $50$.
We select the best parameters for the global learning rate $\eta_g$, local learning rate $\eta_l$, $L_2$ regularization parameter $\lambda$, and staleness exponent $\alpha$ with the following sweep ranges
\begin{align}
    \eta_g  & \in \{1.0, 0.1, 0.01\}, \notag \\
    \eta_l  & \in \{0.1, 0.03, 0.01, 0.003, 0.001\}, \notag \\
    \lambda & \in \{5e^{-3}, 5e^{-4}, 5e^{-5}\},  \notag \\
    \alpha  & \in \{0.1, 0.5, 1.0, 1.5, 2.0\}. \notag
\end{align}
We have found that the best values of $\eta_g$, $\lambda$, and $\alpha$ are $1.0$, $5e^{-4}$, and $1.0$, respectively for both MNIST and CIFAR-10 datasets. Finally, we have found that the best value of $\eta_l$ is $0.01$ and $0.1$ for MNIST and CIFAR-10 datasets, respectively.

\end{document}